\newcommand{\R}{\mathbb{R}}
\newcommand{\unmatched}{\bot}
\newcommand{\match}{\mu}
\newcommand{\util}{\mathcal{U}}
\newtheorem{theo}{Theorem}
\newtheorem{defi}{Definition}
\theoremstyle{remark}
\newtheorem{remark}{Remark}
\newtheorem{example}{Example}
\newtheorem{lemma}{Lemma}
\title{ Rawlsian many-to-one matching with non-linear utility
}
\author{
  Hortence Phalonne Yiepnou Nana, Andreas Athanasopoulos,Christos Dimitrakakis  \\
  University of Neuchatel \\
  \texttt{\{hortence.yiepnou, andreas.athanasopoulos, christos.dimitrakakis\}@unine.ch} \\
}
\begin{document}
\maketitle

\begin{abstract}
We study a many-to-one matching problem, such as the college admission problem, where each college can admit multiple students. Unlike classical models, colleges evaluate sets of students through non-linear utility functions that capture  diversity between them. In this setting, we show that classical stable matchings may fail to exist. To address this, we propose alternative solution concepts based on Rawlsian fairness, aiming to maximize the minimum utility across colleges. We design both deterministic and stochastic  algorithms that iteratively improve the outcome of the worst-off college, offering a practical approach to fair allocation when stability cannot be guaranteed.
\end{abstract}

\keywords{matching \and fairness \and stability \and utility}

\section{Introduction}

This paper focuses on the problem of many-to-one matching.
That is, one set of agents (e.g., students) needs to be matched to another (e.g., colleges). Each student is matched to one college, but each college is matched to multiple students. The matching is performed in such a way as to respect both the student and the college preferences. In the simplest setting, students rank colleges, while colleges have a linear preference over students, e.g., they always prefer students with higher grades; \cite{gale1962college,roth1985college,roth1989college,biro2015college}. In this paper, we relax the linearity assumption by considering that the college preferences are over \emph{subsets} of students. This allows us to model colleges wanting a diverse student body in terms of degree topic or socioeconomic background.

A desirable fairness property in matching is stability\cite{klaus2005stable,irving2010finding,abraham2005almost}. For example, consider a college assignment where Alice is assigned to Amherst college, and Bob to Boston College. If Alice and Boston both prefer each other over their current matches, they have an incentive to deviate from the proposed allocation. Stable pair-wise matching ensures that no pair can improve their outcome by abandoning the given assignment. Other types of stability in many-to-one matchings are guaranteed under various assumptions such as additivity or responsiveness.\cite{gale1985some, roth1984evolution, sonmez1994strategy, subramanian1994new, konishi2006credible} 

\textit{The  challenge here arises from nonlinear college utilities.} Here, we assume that colleges have a utility function over students that is nonlinear, i.e. it is not additive over subsets of students. This crates an interdependence, as the utility a college gains from admitting a student depends on which other students are already admitted. In that case, we show that a stable matching may not exist.

\textit{Our approach.}  To address this, we turn to a max–min
perspective: rather than insisting on stability, we aim for a Rawlsian
of fairness: maximizing the satisfaction of the worst-off
participant, in this case college.

\textbf{Summary of contributions.} Our contributions are two-fold.

\emph{Non-linear utilities.} First, we consider a matching problem such
as college admissions, where college utilities are non-linear. This
means that the utility of admitting a set of students is not a sum of
the utilities of the students within that set. We show that classical stable
matching solutions fail under these conditions and analyze stability
limitations by providing both theoretical and empirical evidence that
stability may not exist when utilities exhibit interdependence. This
motivates the need for alternative solution concepts.

\emph{Rawlsian} fair matching: we introduce a max-min optimization approach with non-linear utility to maximize the minimum satisfaction between participants, giving a robust criterion when stability is unattainable. Through experiments, we demonstrate how this approach improves outcomes for the poorest college over time. This is in contrast to maximizing the average utility across colleges, which is easier to optimize for, but which frequently results in one or more colleges achieving near-zero utility.

\textbf{Paper organization.} The remainder of this paper is structured
as follows. We review the related work in Section \ref{sec:rel} and
highlight the gap our study addresses. Next, we formalize the setting
by introducing key definitions, including the general framework,
how preferences are represented, and how utilities could be defined
for both students and colleges in Section \ref{sec:setting}. In
Section \ref{sec:stab}, we show that stability, an essential aspect of
classical matching theory, may not exist under group-dependent
college preferences. Motivated by this limitation, we propose in
Section \ref{sec:max-min}, max-min matching as an alternative solution
concept. For this, we present a deterministic and a stochastic greedy
method for matching based on iterative allocation and swapping. We
then evaluate these approaches through experiments that illustrate
their practical performance (Section \ref{sec:experiments}), comparing
them with the Gale-Shapley algorithm (which is purely greedy and does not perform swaps) as a baseline. We conclude with a discussion of implications and potential future directions.

\section{Related work} \label{sec:rel}
\textit{Stable matching .} The study of matching has its roots in the classical stable matching problem formulated by Gale and Shapley \cite{gale1962college}. Their framework has since become a foundation in the design of allocation mechanisms. The matching of one-to-one, in which each agent is paired with exactly one counterpart, has been extensively analyzed, particularly in the context of stable marriage \cite{roth1985college,becker1973theory,lichter2020mismatches}.
Over time, researchers have extended this foundation to many-to-one contexts, such as the assignment of students to colleges, where institutions can allocate multiple seats while respecting the individual preferences of students. Roth’s influential work on college admissions highlighted the role of stability in these environments and directly connected the theory to centralized allocation systems in the real world \cite{roth1985college, roth1989college}. Stability, in particular, has remained a key criterion in these settings, ensuring that no student or institution would prefer to deviate from the proposed outcome.
Building on these early contributions, subsequent work has tried to refine the matching process by incorporating concerns about fairness, efficiency, and strategic behavior. For instance, \cite{klaus2005stable} explored the design of stable mechanisms that are also strategy-proof, while \cite{irving2010finding} examined algorithmic approaches to efficiently compute stable outcomes. Similarly,  \cite{abraham2005almost} investigated settings in which exact stability may be unattainable, but nearly stable solutions can still provide practical value.

\textit{Complex preferences.} Beyond individual preferences, researchers have also paid close attention to scenarios where group preferences come into play \cite{brafman2006preferences,eatonmodeling}. Their model captures situations where the desirability of including one object depends on the composition of the entire set. \cite{binshtok2007computing,guo2009learning,desjardins2006learning} address the computational side of the choice of subsets, proposing algorithms to select optimal subsets when preferences depend on the properties of the group rather than individuals, highlighting the inherent NP-hardness of the problem. \cite{brewka2010representing} take a more abstract view, developing formalisms to represent preferences over sets directly, either by lifting preferences on individuals or through incremental improvements, extending CP-net (short for Conditional Preference Networks) style reasoning.

\textit{Many-to-one matching.} In the context of matching,
\cite{roth1989college} and \cite{konishi2006credible} used the notion of responsive preferences, where institutional preferences over sets are built from their individual rankings. Although this guarantees the existence of a stable match in the many-to-one setting, the responsiveness assumption is not always applicable in practice.  These
non-linear preferences are crucial for capturing the complex realities of real-world allocation. As our work demonstrates, the existence of
stable matchings is not guaranteed in such a non-linear setting.

\textit{Rawlsian fairness.} When stability cannot always be achieved, attention can be turned to fairness-oriented objectives as alternative guiding principles for designing matching mechanisms. An approach widely studied in this context is Rawlsian fairness (or max-min), which aims to maximize the satisfaction of the least-avantaged participant in the system \cite{sakaue2020guarantees,danna2017upward}. Instead of focusing only on efficiency or stability, this perspective makes a point on equity, ensuring that no individual or institution is left disproportionately dissatisfied. This criterion is especially relevant in resource allocation where competing interests make stable outcomes unattainable or undesirable, but where fairness between participants remains a concern \cite{coluccia2012optimality}. While \cite{garcia2020fair} introduce the distributional max-min fairness framework to address individual fairness in matching problems where multiple optimal solutions exist, \cite{garcia2021maxmin} used the same framework to ensure individual fairness in ranking while also adhering to group fairness constraints. Their contribution is an exact, polynomial-time algorithm that computes a fair max-min distribution for a general class of search problems, including ranking. Recent contributions, such as \cite{hosseini2024bandit}, extend this principle to fairness-aware optimization in linear matching markets, using online learning frameworks to balance exploration with equitable treatment of participants. Another related work is the study of equitable mechanisms in resource allocation, where the goal is not only to ensure fairness in the worst case (such as max-min fairness) but also to balance satisfaction more evenly among participants \cite{barman2018groupwise}.

Our approach extends the literature on many-to-one matching by moving beyond linear utility or responsiveness assumptions. This allows colleges to have complex preferences on sets of students. This is because institutions do not just care about individual matches, but also about complementarities (e.g., diverse student groups), threshold effects (e.g., securing a minimum number of high achievers), or other forms of interactions and group effects. Adapting max–min fairness in this context allows us to better reflect the complexities of real-world allocations.

\section{Definition and setting}\label{sec:setting}

We are interested in two-sided matching problems in which each agent on one side gets matched to multiple agents on the other side, up to some constraint. This corresponds, for example, to the problem of allocating students to college places based on each other's preferences. For ease of exposition, we will refer to one side as \emph{students} and the other as \emph{colleges}. Let $S = \{s_1, s_2, \ldots, s_n\}$ be the set of students and $C = \{c_1, c_2, \ldots, c_m\}$ be the set of colleges. Each student $s \in S$ has a preference order over \emph{colleges} in $C$, as well as being unmatched ($\unmatched$:  not attending college). Without loss of generality, we assume that this is defined through a utility function $u_s : C \to \R$ so that a student $s$ prefers college $c$ to $c'$ if $\util_s(c) >\util_s(c')$.

Each college $c \in C$ has a preference over \emph{sets} of students in $S$. In particular, college $c$ has a utility function $\util_c : 2^S \to \R$ such that it prefers a set of students $A \subset S$ to $B$ iff $\util_c (A) > \util_c (B)$.
When $\util_c$ is non-linear, that is, it cannot be factorized as $\util_c(A) = \sum_{s \in A} u'_c (s)$, then stable matching is not guaranteed to exist \cite{roth1989college}.
In addition, each college has a quota $q_c$, which is the maximum number of students it can admit. 

\begin{defi} [Matching]
A matching $\match$ is a function from $S$ to subsets of $C \cup \unmatched$ such that (i)  For each student $s \in S$, $\match(s) \in C \cup \{\unmatched\}$, where $\unmatched$ means they are unmatched, and $|\match(s)| = 1$, $s$ cannot be matched in more than one college ; (ii) For each college $c \in C$, $\match(c) \subseteq S$, meaning $c$  is matched to a subset of students in $S$ and the size of $\match(c)$ must not exceed the quota: $|\match(c)| \leq q_c$, (iii)
For every student $s \in S$ and every college $c \in C$,
 $\match(s) = c$ if and only if $s \in \match(c)$.
\end{defi}

We denote by $\match (s_1) = c$ that the student $s_1$ is matched with the college $c$ under the matching $\match$.  Similarly, $\match(c) = \{s_1, s_2\}$ means that the college $c$, which has a quota $q_c = 2$, enrolls students $s_1$ and $s_2$.

\begin{example} \label{ex:1} Consider this diagram that represents the matching between 2 colleges and 5 students.
$$
\begin{array}{c|ccc}
\match & c_1 & c_2 & U \\
    &  s_3 c_1 & s_2 s_5 & s_4 s_1 \\
\end{array}
$$
The matching $\match$ assigns student $s_3$ to college $c_1$ with quota $q_{c_1} = 2$, students set $s_2 s_5$ to college $c_2$ with quota $q_{c_2} = 2$ and students $s_4$ and $s_1$ remain unmatched.
\end{example}

Recall that in classical many-to-one matching markets, a matching is said to be stable if no student-college pair exists in which both parties would strictly prefer to be matched with each other over their current assignments \cite{che2019stable}. More formally, \begin{defi}[Stable Matching]
A matching $\match$ is said to be \emph{stable} if there does not exist a student-college pair $(s, c) \in S \times C$ such that:
\begin{itemize}
    \item [(i)] $s$ strictly prefers $c$ to her current assignment $\match(s)$, and
    \item [(ii)] either $|\match(c)| < q_c$ (college $c$ has not filled its quota) or there exists $s' \in \match(c)$ such that $c$ prefers $s$ over $s'$.
\end{itemize}
Such a pair $(s, c)$ is called a \emph{blocking pair}. If there is no blocking pair, the matching $\match$ is stable.
\end{defi}

A matching is said to be individually rational if all agents prefer being matched to remaining unmatched \cite{roth1992two}. In our setting, we relax this assumption: A college may leave a seat unfilled rather than admit a student if that student does not fill their criteria. The same is true for students who may choose to remain unmatched rather than be matched with a college if that college does not interest them. 

\section{Stability may fail under set preferences} \label{sec:stab}
The Gale-Shapley \cite{gale1962college} algorithm stops when there are no unmatched students or remaining college places. However, in our setting, the resulting matching may not be stable, as shown with a counter-example in Appendix~\ref{app:gsa_counterexample}. 
In addition, the allocation could still be improved by exchanging students between colleges, as is shown in the same counter-example, using the update function defined below.

\begin{defi}[Update function]
Let $\match^{(n)}$ be the current match, and let $(c,s)$ be the blocking pair chosen at step $n$.  
The update function that models the swap process
$$
f: \match \times (S \times C) \to \match
$$
produces the new matching $\match^{(n+1)} = f(\match^{(n)}, (c,s))$ as follows:
\begin{enumerate}
    \item Assign student $s$ to college $c$ if $|A_c^{(n)}| < q_c$.
    \item If $|A_c^{(n)}|$ exceeds its quota, remove the student $s' \in A_c^{(n)}$ who is the least advantageous to the set according to the deterministic tie-breaking rule.
    \item Update the previous assignment of $s$ (either another college $c'$ or the unmatched set $U^{(n)}$) to reflect that $s$ moved, 
\end{enumerate}
\end{defi}

 We then define the greedy swap-based algorithm (Algorithm~\ref{alg:g}) that attempts to find a stable matching. The algorithm finds a pair of students with. 

Formally, when a matched student $s$ prefers a full college $c$ whose current set of assigned students is $A_c^{(n)}$ and vice-versa, the college identifies the student $t$ to remove based on $\util_c$, that is,
\begin{equation}
\max_{t \in A_c^{(n)}} 
    \util_c\left((A_c^{(n)} \setminus \{t\}) \cup \{s\}\right)
    \geq 
    \util_c\left((A_c^{(n)}\right).
\label{eq:tiebreak}
\end{equation}
When there is more than one possible student to exchange, we use \textit{deterministic tie-breaking} based on the student's index.

\begin{algorithm}[H]
\caption{Greedy Matching with Swap-Based Refinement}
\label{alg:g}
\begin{algorithmic}[1]
\State \textbf{Input:} Students $S$, Colleges $C$ with utility functions $\util_c$ and quotas $q_c$, Preferences $P(s)$ and $P(c)$.
\State \textbf{Initialize:} $A_c^{(0)} \gets \emptyset$ for all $c \in C$; $U^{(0)} \gets S$

\vspace{0.3em}
\While{there exists $s \in U^{(0)}$ and $c \in P(s)$ not yet proposed to}
    \State Pick the smallest-index student $s$ in $U^{(0)}$ that still has an unproposed college
    \State Student $s$ proposes to next preferred college $c$ in $P(s)$
    \If{$|A_c^{(0)}| < q_c$}
        \State Accept $s$: $A_c^{(0)} \gets A_c^{(0)} \cup \{s\}$; $U^{(0)} \gets U^{(0)} \setminus \{s\}$
    \Else
        \State Among all $s' \in A_c^{(0)}$ such that replacing $s'$ with $s$ improves college $c$'s group 
        \State If {such $s'$ exists, choose the one that makes the better improvement}
            \State $\quad$ $A_c^{(0)} \gets A_c^{(0)} \cup \{s\} \setminus \{s'\}$; $U^{(0)} \gets U^{(0)} \cup \{s'\} \setminus \{s\}$
    \EndIf
\EndWhile

\vspace{0.5em}
\State \textbf{Refinement: Swap-Based Blocking Pair Resolution}
\For{$n = 1$ to $T$}
    \State Let $\match^{(n)} = (A_c^{(n)}, U^{(n)})$ be the current matching
    \For{each student $s \in S$}
        \For{each college $c \in C$}
            \If{$(c, s)$ is a blocking pair under $\match^{(n)}$}
                \State Resolve $(c,s)$ and update :\State $\match^{(n+1)} = f(\match^{(n)}, (c,s))$
                \If{cycle or no blocking pair found}
                    \State \textbf{Terminate}
                \EndIf
            \EndIf
        \EndFor
    \EndFor
\EndFor
\end{algorithmic}
\end{algorithm}

  In iteration $n$, we have $\match^{(n)} = \left( A_c^{(n)} \mid c \in C;\ U^{(n)} \right)$, where $A_c^{(n)}$ is the set of students assigned to the college $c$ and $U^{(n)}$ is the set of unmatched students. At each iteration, we detect and resolve a blocking pair $(c, s)$, where the student $s \in U^{(n)} \cup A_{c'}^{(n)}$ prefers college $c$ over their current assignment, and $c$ strictly prefers to replace some $s' \in A_c^{(n)}$ with $s$ to improve its set utility. If such a pair exists, we update the matching by setting $A_c^{(n+1)} = A_c^{(n)} \cup \{s\} \setminus \{s'\}$, updating the source college $A_{c'}^{(n+1)} = A_{c'}^{(n)} \setminus \{s\}$ or unmatched set, and set $\match^{(n+1)} = f(\match^{(n)}, (c, s))$.  
The algorithm continues iteratively until no blocking pair remains or a previous matching reappears, at which point it terminates. Unfortunately, although swapping allows us to find stable matchings in more cases, this is not guaranteed.
\begin{theo}
\label{th:1}
Algorithm~\ref{alg:g} does not always result in a stable matching under group preferences.  
\end{theo}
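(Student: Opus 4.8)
The plan is to prove Theorem~\ref{th:1} by exhibiting an explicit instance — a handful of students and two colleges with quotas, together with student preference lists and non-additive college utilities — on which Algorithm~\ref{alg:g} terminates with a matching that still admits a blocking pair. Concretely, I would fix the combinatorial skeleton $|C| = 2$ with $q_{c_1} = q_{c_2} = 2$ and four students, so that every full matching corresponds to a partition of the four students into two unordered pairs; there are only three such partitions, which makes the swap dynamics easy to enumerate. I would then choose $\util_{c_1}, \util_{c_2} : 2^S \to \R$ to be genuinely non-linear — e.g.\ rewarding a ``diverse'' pair over a ``homogeneous'' one while still strictly ranking pairs of the same composition — so that the induced best-response relation on the three partitions has a cyclic, rock--paper--scissors structure: from partition $P_1$ some blocking pair $(c,s)$ resolves into $P_2$, from $P_2$ into $P_3$, and from $P_3$ back into $P_1$. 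The student preferences are set so that in each case the student involved strictly prefers the new college, i.e.\ the pair is a genuine blocking pair in the sense of the definition of stable matching.

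The key steps are then: (i) run the greedy proposal phase (lines 3--13) on the instance and record the matching $\match^{(0)}$ it outputs, choosing the student indices so that the deterministic tie-breaking of Eq.~\eqref{eq:tiebreak} lands it at one specific partition; (ii) show that in the refinement loop the blocking pair detected first drives $\match^{(0)} \to \match^{(1)}$, then $\match^{(1)} \to \match^{(2)}$, then $\match^{(2)} \to \match^{(0)}$, so that a previously seen matching reappears and the cycle-detection clause of Algorithm~\ref{alg:g} forces termination; (iii) observe that the matching at which the algorithm halts still contains the blocking pair that would have triggered the next swap, so it is not stable. As a by-product the same instance witnesses that \emph{no} stable matching exists (each of the three partitions is blocked), which reconciles the example with the non-existence claim inherited from \cite{roth1989college}; alternatively, one could instead tune $T$ so the algorithm simply exhausts its iteration budget with a blocking pair still present, but the cyclic construction is the cleaner witness.

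The main obstacle will be the simultaneous calibration of the two college utility functions and the student preference lists so that (a) all three required transitions are genuine blocking-pair resolutions under the update function $f$, including its deterministic tie-breaking on the student $s'$ that gets removed; (b) the dynamics actually close into a cycle rather than reaching a fixed point or escaping the three-partition region via the unmatched set $U$; and (c) no admissible ordering of the nested loops over $(s,c)$ in the refinement phase can leave the cycle by selecting a different blocking pair first. Once a consistent numerical assignment of the $\util_{c}$ values is found, verifying the execution trace and the residual blocking pair is a finite, routine check, which is what I would relegate to the appendix referenced in the text.
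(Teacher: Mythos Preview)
Your overall strategy --- exhibit a concrete instance on which the refinement loop cycles and hence halts at a matching that still admits a blocking pair --- is exactly what the paper does. The gap is in the combinatorial skeleton you propose. With four students and $q_{c_1}=q_{c_2}=2$, a single application of the update function $f$ to a full partition can never land on another full partition: resolving a blocking pair $(c,s)$ with $s\in\match(c')$ moves $s$ from $c'$ to $c$ and, since $c$ was already at quota, ejects some $s'\in A_c$ to $U$, leaving $|A_{c'}|=1$ and $|U|=1$. The three-partition region is therefore not invariant under $f$, and the rock--paper--scissors cycle $\match^{(0)}\to\match^{(1)}\to\match^{(2)}\to\match^{(0)}$ among the three full partitions cannot be realised as three swap steps of the algorithm. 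You list ``escaping the three-partition region via $U$'' as a calibration obstacle in (b), but it is structural: every step out of a full partition necessarily leaves that region, so no calibration of $\util_{c_1},\util_{c_2}$ will keep the trajectory inside it. The same issue undercuts the non-existence by-product: ruling out the three full partitions is not enough, since matchings with $|U|\ge 1$ must also be blocked.

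The paper's instance avoids this by taking \emph{five} students with $q_{c_1}=q_{c_2}=2$, so that at least one student is always unmatched and the swap dynamics naturally move through states with nonempty $U$; the resulting cycle has nine steps, not three, before a matching repeats. Your plan is readily salvaged along the same lines: either add a fifth student, or keep four but drop the partition narrative and trace a longer cycle through states with $|U|\ge 1$. In either case the final verification is the routine finite check you describe; it is only the specific $4$-student, $3$-step cycle that cannot be made to work.
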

This is proven either by counterexample in Appendix~\ref{app:gsa_counterexample}, which also contains an example run of the algorithm, or in Appendix~\ref{app:1} with a resulting cycle. The last one will be attached with an animation showing the detection of the cycle from the algorithm~\ref{alg:g}.

\begin{theo}
\label{th:2}
The greedy swap-based algorithm
always terminates in finite time. On termination, it either:
(i) outputs a stable matching (no blocking pair remains), or
(ii) detects a repeated matching, and therefore certifies a cycle. 
\end{theo}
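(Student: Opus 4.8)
The plan is to establish the two assertions separately and to regard the refinement phase of Algorithm~\ref{alg:g} as a deterministic dynamical system on a finite state space. Let $\mathcal{M}$ denote the set of matchings respecting all quota constraints; since each student is assigned to one of the $|C|$ colleges or to $\unmatched$, $|\mathcal{M}| \le (|C|+1)^{|S|}$, a crude bound that could be tightened using the $q_c$. For \emph{termination}, I would first note that the initial proposal phase halts by the usual Gale--Shapley argument: each student proposes to each college at most once, so the \texttt{while} loop runs at most $|S|\cdot|C|$ times with bounded work per iteration. The refinement phase is a doubly nested loop with outer bound $T$ whose inner passes scan the finite set $S\times C$, so the whole algorithm performs finitely many elementary operations and halts in finite time. (If one prefers to leave $T$ unspecified, the same follows because the refinement loop is instructed to stop as soon as a blocking pair is absent or a matching repeats, and the next step shows this occurs within $|\mathcal{M}|+1$ iterations.)

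For the \emph{dichotomy}, the key observation is that the refinement phase is deterministic: at any matching, the blocking pair it resolves is the first one in the fixed scan order over $S\times C$, and the student evicted from a full college is pinned down by the tie-breaking rule of Eq.~\eqref{eq:tiebreak}. Hence $\match^{(n+1)}$ is a function of $\match^{(n)}$ whenever a blocking pair exists, the trajectory $\match^{(1)},\match^{(2)},\dots$ stays inside $\mathcal{M}$, and moreover $\match^{(n+1)}\neq\match^{(n)}$ because the proposing student, who was not previously assigned to $c$, becomes so. Examining step $n$: if $\match^{(n)}$ admits no blocking pair then, by the definition of stability, it is stable and is returned --- this is case (i); otherwise the algorithm forms $\match^{(n+1)}$ and compares it against the history of previously visited matchings. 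Since the $\match^{(n)}$ cannot all be distinct once more than $|\mathcal{M}|$ of them have been produced, the pigeonhole principle forces $\match^{(n)}=\match^{(n')}$ for some $n'<n$, which is precisely the repeated-matching condition --- this is case (ii). Thus one of (i) or (ii) is reached within at most $|\mathcal{M}|+1$ refinement steps.

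The one delicate point --- the step I would treat most carefully --- is justifying that a repeated matching genuinely \emph{certifies} a cycle rather than being an accident of bookkeeping. This is exactly where determinism is used: because the successor of each matching is unique, $\match^{(n)}=\match^{(n')}$ forces the entire future trajectory from $\match^{(n')}$ to repeat verbatim, so no stable matching is reachable from it and the process would loop forever if it were not halted. Equivalently, iterating the (partial) successor map on the finite set $\mathcal{M}$ from any start either reaches a point at which the map is undefined --- a stable matching, where $f$ is never invoked since there is no blocking pair --- or produces an infinite sequence in $\mathcal{M}$, which by finiteness must repeat a value, necessarily after at least two steps since $\match^{(n+1)}\neq\match^{(n)}$. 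Once this observation is in place the argument is short; the remaining work is only the routine bookkeeping behind the two termination bounds.
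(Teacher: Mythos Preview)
Your proposal is correct and follows essentially the same route as the paper: finiteness of the matching space $\mathcal{M}$ together with determinism of the successor map $f$ (fixed scan order plus the tie-breaking rule) forces the trajectory either to reach a state with no blocking pair or to revisit a state, and this must happen within $|\mathcal{M}|+1$ refinement steps. Your write-up is in fact somewhat more careful than the paper's --- you explicitly argue that $\match^{(n+1)}\neq\match^{(n)}$ and you spell out why determinism is what makes a repeated state a genuine certificate of cycling rather than a bookkeeping coincidence --- but the underlying argument is the same pigeonhole-on-a-finite-deterministic-orbit.
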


\begin{proof}
Let $\match^{(0)}$ be the initial match and define the sequence
$\match^{(n+1)} = f(\match^{(n)})$ for $n \geq 0$. 
As the set of colleges and students is finite, so is the set of feasible matchings $\match$. As the update rule $f$ is deterministic, each $\match^{(n)}$ has a single well-defined next matching (see Lemma~\ref{lem:2}). Therefore, the sequence $\match^{(0)}, \match^{(1)}, \match^{(2)}, \dots$
cannot contain infinitely many distinct elements. Two cases must occur:
\begin{itemize}
    \item For some $t$, $\match^{(t+1)} = \match^{(t)}$. 
    Then $\match^{(t)}$ is a fixed point of $f$, which means that there is no blocking pair, so $\match^{(t)}$ is stable.
    \item Otherwise, there exists $i < j$ with $\match^{(i)} = \match^{(j)}$. 
    Because $f$ is deterministic, the sequence from $\match^{(i)}$ onward repeats forever. The algorithm detects this repeated state and terminates, 
    reporting a cycle.
\end{itemize}

Thus, the algorithm cannot run indefinitely: It always halts after at most
$|\mathcal{M}|+1$ iterations, either with stable matching or with a detected cycle.
\end{proof}

Obviously, cycle-detection only works because the algorithm is deterministic. In a randomised algorithm, we would have to keep track of all previously examined matchings.

\begin{remark}[Complexity bound]
The number of iterations is $O(|\mathcal{M}|)$, as the algorithm stops after detecting a cycle, and a cycle cannot be longer than the number of matchings. Of course, this grows exponentially with number of students $n$.
\end{remark}

Given that stable matchings may not be achievable and may not even exist, we consider an alternative solution concept, based on Rawlsian fairness, in the next section.

\section{ Max-Min Matching Problem} \label{sec:max-min}
In the linear college admission problem, the GSA is generally used to find a stable match, ensuring that no student-college pair prefers each other over their current assignments. However, in the presence of non-linear utility functions, we prove that this stability assumption could break down. For that reason, we consider Rawlsian fairness as a solution concept. This is the idea of maximizing the worst-case utility between colleges. This ensures fairness by guaranteeing that the least-advantaged college receives as much utility as possible. The setting we defined earlier does not change at all. So, we aim to find a match $\match$ that maximizes the worst-case utility across all colleges: $\max_{\match \in \mathcal{M}} \min_{c \in C} \mathcal{U}_c(\match(c))$.

We first minimize over colleges to find the worst-off college (with the lowest utility), then we maximize that lowest utility by looking for a matching that improves the worst-off college as much as possible. It guarantees a fair assignment where no college receives an extremely bad outcome. As a college admits a group of students but those students are added once a time, we can define the binary decision variables as:  
$$ x_{c, s} =
\begin{cases}
1 & \text{if college } c \text{ admits student } s \\
0 & \text{otherwise},
\end{cases}$$
The final optimization problem can be summarized as follows:
\begin{align}
&\max_{x} \min_{c^* \in C} \mathcal{U}_{c^*}(\{ s \in S \mid x_{c^*, s} = 1 \}) \notag\\
 \text{Subject to:} \quad \notag
 &\sum_{c \in C} x_{c, s} \leq 1 \quad \forall s \in S,\\
 &\sum_{s \in S} x_{c, s} \leq q_c \quad \forall c \in C, \\
 &x_{c, s} \in \{0, 1\} \quad \forall c \in C, \forall s \in S  \notag
 \end{align}
We consider two algorithms to solve this problem, the first deterministic and the second stochastic. Both algorithms attempt to iteratively improve the utility of the worst-off college. The first one is deterministic, and only moves up to two students at a time. The second can exchange larger groups of students, but relies on randomness to reduce complexity.

\subsection{Deterministic greedy max-min}
Algorithm \ref{alg:det} works in two phases. First, it generates an initial feasible matching using GSA, ensuring that colleges admit students up to their quota. Although this matching satisfies capacity constraints, it may not optimize fairness in terms of college utilities. In the second phase, once the initial matching is computed, it iteratively increases  the utility of the worst-off college through swapping operations, privileging unassigned sets, and without reducing the utility of other colleges. The process stops when swaps start repeating, or we can no longer improve on the worst college. 

Consequently, the algorithm always improves the utility of the worst-off college. This yields a final matching that prioritizes fairness over strict stability. 

\begin{algorithm}[ht]
\caption{Deterministic method for matching}
\label{alg:det}
\begin{algorithmic}[1]

\State \textbf{Input:} Students $S$, Colleges $C$ with utility functions $\util_c$ and quotas $q_c$, Preferences $P(s)$ and $P(c)$,  Set of feasible matchings $\mathcal{M}$
\State \textbf{Initialize:} $A_c^{(0)} \gets \emptyset$ for all $c \in C$; $U^{(0)} \gets S$

\State  Compute the initial matching $\match_0(c)$ using the GSA.

\vspace{0.5em}
\For{$t = 0 \rightarrow T$}
    \State Identify the worst-off college: $c^* \gets \arg\min_{c \in \mathcal{C}} \util_c(A_c^t)$

    \vspace{0.5em}
    \State \textbf{Consider Unassigned Students}
    \For{each unassigned student $s \in U^t$}
        \If{$|A_{c^*}^t| < q_{c^*}$ }  
            \State Tentatively add $s$ to $A_{c^*}^t$
            \If{$\mathcal{U}_{c^*}(A_{c^*}^{(t+1)}) > \mathcal{U}_{c^*}(A_{c^*}^t)$}
                \State $A_{c^*}^{(t+i)} \rightarrow A_{c^*}^t \cup \{s\}$,  $U^{(t+1)} \rightarrow U^t \setminus \{s\} $.
            \Else
                \State $A_{c^*}^{(t+1)} \rightarrow A_{c^*}^t $,  $U^{(t+1)} \rightarrow U^t $.
            \EndIf
        \Else
            \State Try replacing each $s' \in A_{c^*}^t$ with $s$ 
            
        \EndIf
    \EndFor

    \vspace{0.5em}
    \State \textbf{Consider Swapping with Other Colleges}
    \For{each $c' \in C$, $c' \neq c^*$}
        \For{each $s' \in A_{c'}^t$, $s \in A_{c^*}^t$}
            \If{swapping $s$ and $s'$ improves $\mathcal{U}_{c^*}$ without worsening $\mathcal{U}_{c'}$}
               \State $A_{c^*}^{(t+1)} \rightarrow (A_{c^*}^t) \setminus \{s\}) \cup \{s'\}, A_{c'}^{(t+1)} \rightarrow A_{c'}^t \setminus \{s'\}, U^{(t+1)} \rightarrow U^t \cup \{s\}$
 
            \EndIf
        \EndFor
    \EndFor

    \vspace{0.5em}
    
\EndFor

\end{algorithmic}
\end{algorithm}

\subsection{Stochastic greedy subset max-min}
  The previous algorithm deterministically improves fairness  for the worst-off college by swapping individual students at each step. However, a better result could be obtained if we allowed swapping out larger sets of students. 

\paragraph{Background on stochastic greedy and Sakaue's modification.} The idea of this improved version of stochastic greedy comes from \cite{sakaue2020guarantees} where they are given a non-negative, submodular set function $ \util: 2^S \rightarrow \mathbb{R}$, a cardinality constraint $ |A| \leq K$, a value oracle setting $ \util(A)$ (that they do not know its form). The goal is as follows.  
 \begin{equation}
  \max_{A \subseteq S} \util(A) \quad \text{s.t} \quad  |A| \leq K,
 \end{equation}
where $S$ is the ground set of $n$ elements and $k$ the maximum number of selected elements. One of the first solutions to this problem is the standard stochastic greedy algorithm proposed by \cite{nemhauser1978analysis}. They construct a solution by sampling a fixed number of elements and then incrementally select elements one at a time over $K$ iterations. It has been proven that this algorithm can achieve a $(1 - 1/e)$ -approximation guarantee if $\util$ is monotone \cite{nemhauser1978analysis,badanidiyuru2014fast}, and by assuming that $K \geq 2$ and $n \geq 3K$ hold and that $\epsilon $ is set to satisfy $1/e \leq \epsilon < 1$, \cite{sakaue2020guarantees} prove that this same algorithm can achieve a $\dfrac{1}{4} (1- 2 \cdot \dfrac{K-1}{n-K})^2 $ -approximation guarantee if $\util$ is non-monotone. But we can see that it cannot work if $K  \rightarrow n$. That is why in the same paper, he proposed an improved version of the original Stochastic Greedy designed to guarantee a $\frac{1}{4}(1-\delta)^2$ approximation for a non-monotonal function even when $n$ is not much larger than $K$ and approximate $\dfrac{1}{4}$ if $n \gg K$. This would be more appropriate if we had fewer students because it provides a uniform guarantee across all problem sizes, making it more reliable. We adapted it to our problem with some modifications.

\begin{algorithm}[ht]
\caption{Adaptive Stochastic Greedy Matching}
\label{alg:stoch}
\begin{algorithmic}[1]  
\State \textbf{Input:} Students $S$, colleges $C$ with utility functions $\util_c$ and quotas $q_c$, preference lists $P(s), P(c)$, feasible matchings $\mathcal{M}$, capacity $K = \sum_{c \in \mathcal{C}} q_c$, parameters $\epsilon \in [1/e, 1)$, $\delta \in (0,1)$.

\State \textbf{Initialize:} $A_c^{(0)} \gets \emptyset$ for all $c \in C$; $U^{(0)} \gets S$

\For{$i = 1$ to $K$}
    \State Identify worst-off college: $c^* \gets \arg\min_{c \in \mathcal{C}} \util_c(A_c^i)$
    \State Set $N \gets \max(|S|,\ K + \lceil \tfrac{2K - 1}{\delta} \rceil)$, \quad $k \gets \left\lfloor \tfrac{N}{K} \ln\left(\tfrac{1}{\epsilon}\right) \right\rfloor$
    \State Draw $r \sim \text{Hypergeometric}(k,\ |U^{(i-1)}|,\ N - |A_{c^*}^{(i-1)}|)$
    \State Sample $r$ students uniformly from $U^{(i-1)}$ to form candidate set $R$
    \State Compute marginal contribution: $\util_{A_{c^*}^{i}}(s) \gets \util_{c^*}(A_{c^*}^{i-1} \cup \{s\}) - \util_{c^*}(A_{c^*}^{i-1})$ for all $s \in R$
    \State Select $s^* \in R$ with the highest positive contribution

    \If{$\util_{A_{c^*}^{(i-1)}}(s^*) > 0$ and $|A_{c^*}^{(i-1)}| < q_{c^*}$}
        \State $A_{c^*}^{i} \gets A_{c^*}^{(i-1)} \cup \{s^*\}$, \quad $U^i \gets U^{(i-1)} \setminus \{s^*\}$
    \Else
        \State $A_{c^*}^{i} \gets A_{c^*}^{(i-1)}$, \quad $U^i \gets U^{(i-1)}$
    \EndIf

    \Comment{\textbf{Swapping phase (if no direct gain)}}
    \For{$w \in A_{c^*}^{i}$}
        \For{$c' \in C \setminus \{c^*\}$}
            \For{$v \in A_{c'}^{i}$}
                \If{$\util_{A_{c^*}^{i}}(v) > \util_{A_{c^*}^{i}}(w)> 0$ and $\util_{c'}(A_{c'}^{i}) > \util_{c'}(A_{c'}^{(i-1)})$ and $|A_{c^*}^{i}| < q_{c^*}$}
                    \State $A_{c^*}^{i} \gets (A_{c^*}^{(i-1)} \setminus \{w\}) \cup \{v\}$
                    \State $A_{c'}^{i} \gets A_{c'}^{(i-1)} \setminus \{v\}$, \quad $U^i \gets U^{(i-1)} \cup \{w\}$
                \EndIf
            \EndFor
        \EndFor
    \EndFor

    \State \textbf{Break} if no gain is possible via addition or swapping
\EndFor

\State \textbf{Return:} Final assignment $\mu_k(c)$ for each $c \in C$ and unmatched set $U^k$
\end{algorithmic}
\end{algorithm}

\paragraph{Our adaptation.}  Our algorithm (Algorithm~\ref{alg:stoch}) is an iterative, greedy approach to finding a high-quality solution to the max-min matching problem. At each step, we identify the college with the lowest utility and attempt to improve its assignment. We use a stochastic greedy method by sampling a candidate set of students from the pool of unassigned students and selecting the one with the highest marginal gain for the worst-off college. If no unassigned student provides a positive marginal gain, the algorithm explores swaps with students from other colleges to see if the worst-off college can benefit without negatively impacting the others. This approach is inspired by work on submodular function maximization. For a simplified theoretical interpretation,  similar to \cite{sakaue2020guarantees}, we can conceptualize the student pool as being "virtually extended" with dummy elements. This allows us to use randomized sampling (we sample a candidate set using a hypergeometric distribution) to efficiently explore candidates while ensuring a high probability of finding a beneficial student to add. In practice, however, we do not create these dummy elements; we randomly pick elements as if there were more elements in the pool (only real elements are actually considered for selection). If we oversample, we simply end up skipping over dummy elements because they do not contribute anything.

\paragraph{Properties and limitations.}  
Unlike the guarantees of \cite{sakaue2020guarantees}, which rely on submodularity and a cardinality constraint, our method is a practical heuristic designed for the complex, non-linear max-min matching problem.

\begin{itemize}
    \item \textbf{Feasibility} : At every iteration, the assignment produced by Algorithm~\ref{alg:stoch} is a feasible matching: each student is assigned to at most one college, each college admits at most $q_c$ students, and the assignments are consistent (ie, $s \in A_c$ if and only if $\mu(s)=c$).

\item \textbf{Monotonic improvement}: Let \\ $\util_{\min}^{(i)} = \min_{c \in C} \mathcal{U}_c(A_c^{(i)})$ be the minimum
college utility after iteration $i$. Then Algorithm~\ref{alg:stoch} ensures
$\util_{\min}^{(i)} \ge \util_{\min}^{(i-1)}$ for all $i$, either leaving the other colleges unchanged or without
negatively impacting them.

\item \textbf{Limitation}: It is important to note that the objective function
$\max_{\mu \in \mathcal{M}} \min_{c \in C} \mathcal{U}_c(\mu(c))$ is not submodular and that the matching constraints are not a simple cardinality constraint. Consequently, the approximation bounds of \cite{sakaue2020guarantees}  do not apply to our approach. However, the algorithm is a practical and novel heuristic for a complex nonlinear max-min matching problem that preserves feasibility, ensures monotonic progress, and benefits from the sampling efficiency of stochastic greedy.
\end{itemize}

\section{Experiments} \label{sec:experiments}

To evaluate our methods, we run simulations on synthetic matching markets in which both student and college preferences are randomized. With randomization, we can test robustness across a wide range of scenarios( how the system behaves when people's choices are different) rather than adapting the setup to a single fixed instance. 

\subsection{Preference representation}
Instead of using arbitrary utility functions, in our experiments we assume that students' utility functions are based on ranking of individual colleges. Colleges have a utility function that reflects their preference for individual students within a set and the diversity within that set.  
Each student $s$ has preferences over the colleges indicated by $P(s)$ and each college $c$ has preferences over a set of students ( it can be more than one student or a singleton) indicated by $P(c)$. For example, $P(s)= c_2> c_4> c_5> s> c_1>...$ means that the student $s$ prefers college 2 over college 4, college 4 over college 5, and prefers to be matched with either one of those colleges or being unmatched. So, all the other colleges are not accepted. Similarly, $P(c)= s_2 s_4> s_3 s_1> s_1> s_2> c>...$ gives the prefrerences of college c. For simplicity, all the participants or set of participants that students or colleges did not want them respectively for the match will not appear in the preference list. For example, the $P(s)$ mentioned above will become $P(s)= c_2> c_4> c_5$
and $P(c)$ will become $P(c)= s_2 s_4> s_4 s_1> s_1> s_2$. So each college establishes a ranking over a set of students based on its own institutional priorities and admission criteria, representing its preferences over possible applicants. These preferences are assumed to remain fixed over all admission periods, as student academic profiles and the colleges selection standards are not expected to change rapidly, and the same is true for students.\\

\textit{Student's Utility.} Each student can only be admitted to one college and wants to be admitted to its highest preferred college, so their utility is linear. 
   If a student $s_i$ is admitted to college $c_j$, its utility $U_{s_i}(c_j)$ is simply the 1/rank of $c_j$ in their preference list. If they are not admitted to any college, $ U_{s_i} (\unmatched) = 0 $.\\

\textit{College's Utility.}
Each college $c_j$ has a non-linear utility function, which represents a way for the college to measure how happy he is with a set of admitted students, taking into account both how much it ranks each set ( base contribution) and how diverse the students are within set ( diversity factor). This utility can be influenced by various factors, such as student academic performances, extracurricular achievements, genre, background, etc.

\begin{enumerate}
 \item  Base Contribution: Each group's base contribution to a college's utility is determined by the inverse of their rank in the college's preference list. Let $P(c) = (A_1, A_2, \dots, A_{L_c})$ be the ordered preference list of college $c$ over admissible student \emph{sets} (each $A_i\subseteq S$ and $|A_i|\le q_c$).  
We define the rank of a set $G$ for college $c$ as
$$
\mathrm{Rang}(G,c) \;=\;
\begin{cases}
r & \text{if } A_r = G \text{ for some } r\in\{1,\dots,L_c\},\\[4pt]
+\infty & \text{if } G \notin P(c).
\end{cases}
$$
The base contribution is then:
$$
\text{Base Contribution} \;=\; 
\begin{cases}
\displaystyle \frac{1}{\mathrm{Rang}(G,c)} & \text{if }\mathrm{Rang}(G,c)<+\infty,\\[6pt]
0 & \text{if }\mathrm{Rang}(G,c)=+\infty .
\end{cases}
$$

A lower rank indicates a higher preference. When a set does not appear on the college preference list, its rank is assumed to be infinite. For example, if a set is ranked 1st in the college's preference list, their base contribution is $1$, if they are ranked 4th, their base contribution is $0.25$, and if not ranked, their base contribution is $0$.

\item Diversity Factor: This factor encourages diversity within the sets of admitted students. If the set consists of either a single student or many students with a different background, then the diversity factor is maximal. We construct it so that it describes the proportion of couples of students with non-similar backgrounds. Each student $s_i$ is characterized by a categorical attribute called their \emph{background} (for example, field of study, gender, socioeconomic status, or region). 
We denote this attribute by $b(s_i)$. 
The similarity relation $s_i \sim s_j$ used in Equation~\ref{util} is defined as:
$
1_{\{ s_i \sim s_j \}} =
\begin{cases}
1, & \text{if } b(s_i) = b(s_j),\\[4pt]
0, & \text{otherwise.}
\end{cases}
$
In other words, two students are considered \emph{similar} if they share the same background.
\end{enumerate}

Using these, the utility function $\mathcal{U}_c(G)$ for a college $c$ with a set $G$ of admitted students can be formalized as:

\begin{equation}\label{util}  
\mathcal{U}_c(G) = 
\begin{cases} 
     \frac{1}{\text{Rang}(G,c)} +  \lambda _c \bigg( 1 - \frac{1} {\binom{|G|}{2}}\sum_{\substack{i,j\in G \\ i\neq j}} 1_{\{ s_i \sim s_j \}}   \bigg)1_{\{|G|\neq 0\}} \\ \quad{} \quad{} \quad{} \quad{} \quad{} \quad{} \quad{} \quad{}  \quad{} \quad{} \quad{} \quad{} \text{ if } \text{Rang}(G,c) < +\infty,
    \\
     0 \quad{}  \text{ otherwise.}
\end{cases}
\end{equation}

where $\lambda_c > 0$ is a weighting coefficient that determines how strongly the college values diversity, $\binom{|G|}{2} = \frac{|G|(|G| - 1)}{2}$ is the total number of pairs of students that can be built from the set $G$ of size $\geq 2$. 

\subsection{Experimenal setup}
\paragraph{Randomized preferences.} 
We generate a pool of students, each assigned a random background from a predefined set. Student preference lists are constructed by first ranking colleges that match their background, followed by the remaining colleges in random order, so that students show a bias toward more compatible institutions while still considering all options. For colleges, we randomly assign a specialization, set a quota, and generate a list of preferred student sets (without duplicates), each up to the size of the quota. The college utilities for these sets are then calculated using the utility function in Equation~\ref{util}, capturing both base contributions and diversity effects.

\paragraph{Evaluation protocol.} 
We evaluated each algorithm by tracking two main metrics:
(i) the minimum utility across colleges (fairness), 
(ii) the average utility across colleges (overall efficiency). 
As baselines, we include the classical Gale–Shapley algorithm (GSA) 
and a greedy assignment without swapping (basic greedy). 
Unless otherwise specified, results are averaged over some randomized runs to smooth out variance. We vary the number of students, college quotas and the diversity parameter $\lambda$ to study how performance scales with market size and preference structure.

\subsection{Fairness improvement}
We first compare the minimum utility across colleges achieved by our approaches and the baselines. Figure \ref{fig:1} up shows the evolution of the worst college for the stochastic, deterinistic, and greedy method, while the one down shows the same behavior for GSA. We cannot merge those plots with the GS method because it takes too much iteration to finish, and consequently, we will not really see the trade for other methods.

\begin{figure}[ht]
  \centering
  \includegraphics[width=0.45\linewidth]{./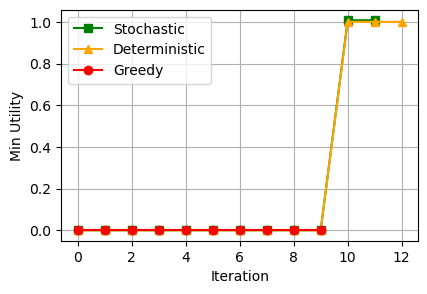}
   \includegraphics[width=0.5\linewidth]{./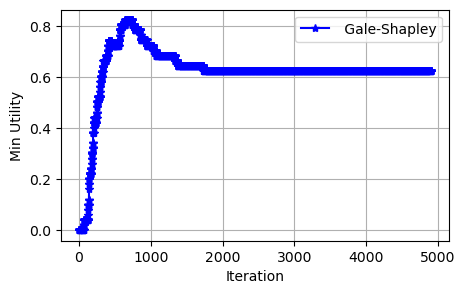}
  \caption{Improvement in minimum utility for all methods. The experiment is done under 100 trials, 500 students, 10 colleges and $\lambda = 1.0$}
  \label{fig:1}
\end{figure}

Figure~\ref{fig:1} shows that both deterministic and stochastic methods consistently raise the utility of the worst-off college compared to GSA and basic greedy, which stop after a few steps because it does not participate in the swapping. The stochastic method practically skips over some steps, and as a result, it reaches better outcomes with less effort, but it may occasionally fail to identify the best possible solution due to its probabilistic nature, compared to deterministic method. The GSA on its side improve the worse college very quickly in the early iterations, but start decreasing at a certain time while looking for stability. This fast convergence is expected, since GSA is designed to reach a stable matching in polynomial time.

\subsection{Overall efficiency}
While fairness is our primary objective, it is also important to understand how much efficiency we may lose by prioritizing the worst-off college. To capture this, we look at the average utility across all colleges, which
reflects the overall quality of the matching.

\begin{figure}[ht]
  \centering
  \includegraphics[width=0.45\linewidth]{./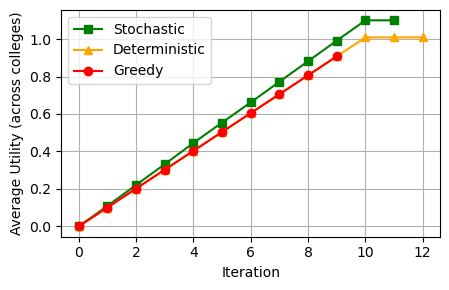}
  \includegraphics[width=0.48\linewidth]{./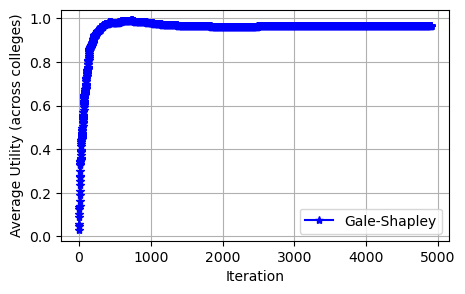}
  \caption{Average utility across colleges under different matching methods}
  \label{fig:2}

\end{figure}
\begin{figure*}[ht]
  \centering
  \includegraphics[width=0.95\linewidth]{./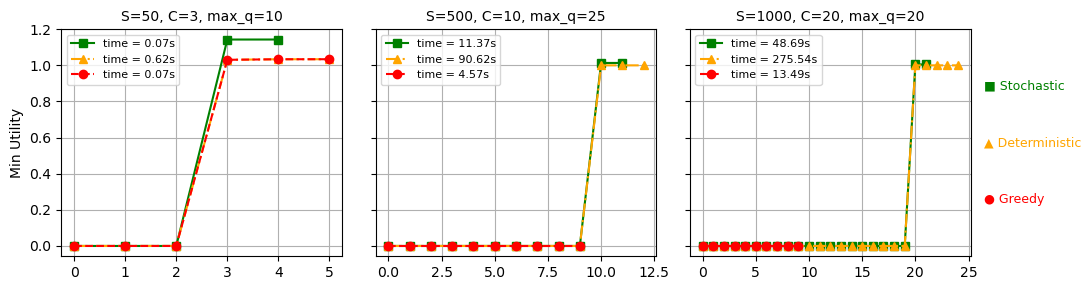}
  \includegraphics[width=0.95\linewidth]{./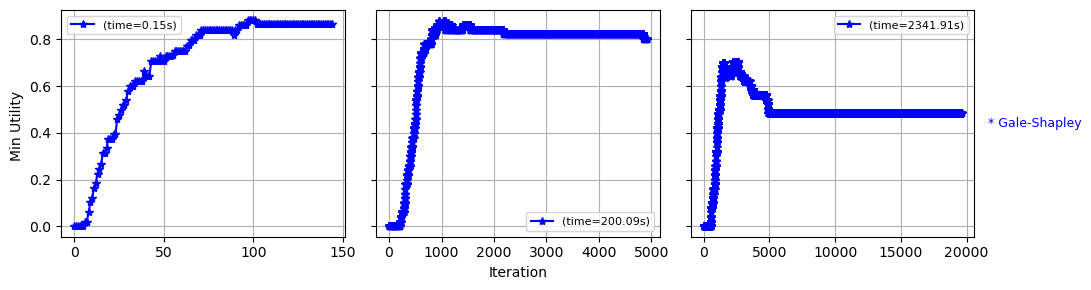}
  \caption{Scalability and runtime with respect to the number of students and colleges for all methods. }
  \label{fig:4}
\end{figure*}

\begin{figure*}[ht]
  \centering
  \includegraphics[width=0.95\linewidth]{./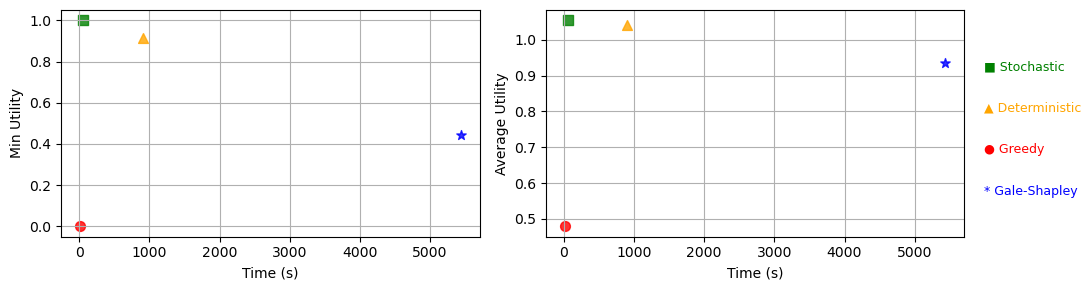}
  \caption{Fairness–Efficiency Trade-off vs. Runtime. N=2500, C=20, max\_q =100 }
  \label{fig:41}
\end{figure*}

Figure~\ref{fig:2} compares how the average utility evolves under different algorithms.
All our methods on the first plot improve the overall utility, but the stochastic increases more quickly and reaches a higher value, showing the benefit of random sampling in exploring better solutions. The deterministic and standard greedy approaches also improve, but stabilize at lower values, where deterministic is better than greedy. As before, GSA grows fast and decreases slightly after 1000 iterations.

We also studied the role of the diversity weight $\lambda$ in the college utility function (both the minimum utility and the average utility) to illustrate the trade-off between fairness and efficiency when diversity is highly valued. The result of this experiment will be found in the Appendix \ref{app:2}.

\subsection{Scalability and runtime}
Finally, we examine the scalability with respect to the number of students,colleges, and maximum quota of each college.

Figure~\ref{fig:4} shows that the deterministic method becomes computationally expensive but still efficient as the market grows, while the stochastic method maintains a significantly lower runtime due to random sampling and adapts well to a larger pool. The greedy method also maintains lower time but does not adapt. GSA on its side, is a bit worse for larger pool and is more computationally expensive than deterministic. This makes the stochastic approach more suitable for larger systems.

Figure~\ref{fig:41} compares the final performance of each method in terms of fairness (left) and efficiency (right) relative to run time and is a summary of Figure~\ref{fig:4} . The stochastic approach achieves the best balance: high minimum and average utility with moderate computation time. The deterministic method performs slightly worse but remains efficient, while Gale–Shapley offers good average utility at a much higher cost. The simple greedy baseline is fastest but yields the lowest fairness and overall performance.
 
\section{Discussion} \label{sec:discussion}
This work starts with the classical stable matching literature and advances with the max-min matching framework. At the beginning of our work, our main objective was to find stable matches, which is not always possible under group preference as mentioned before. We then shifted our objectives, starting by exploring a max-min optimization approach \cite{sakaue2020guarantees}, with the aim of maximizing minimum satisfaction among participants. 

Our experiments show that both deterministic and stochastic greedy variants can improve fairness while keeping overall efficiency reasonable. Compared to the simple greedy baseline, our methods achieve a higher worst-case utility, and unlike the GSA, they do not take very long to reach their solution. In particular, the stochastic version scales better and tends to reach stronger outcomes by exploring a wider range of possibilities. While our method does not guarantee stability, it offers a practical alternative that balances fairness, diversity, and feasibility. This demonstrates the value of moving beyond classical stability notions when dealing with complex and real-word preferences. 

As future work, we plan to test our approach on real admission data and explore hybrid models that combine fairness with partial stability guarantees.

\bibliographystyle{plain}  
\bibliography{references}  

\newpage
\appendix

\section{Additional proofs}

\begin{lemma} \label{lem:2}
Under the deterministic tie-breaking rules,(so that: (i) the loop order over students and colleges is fixed and (ii) whenever a college must choose which current member $s'$ to drop this choice is resolved by a fixed rule), then for every matching $\match\in\mathcal{M}$ there is a uniquely determined next matching $f(\match)$. In other words, the state transition function $f$ is determined in a unique way.  
\end{lemma}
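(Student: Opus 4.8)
The plan is to show that the map $f$ is well-defined as a function by verifying that, given any feasible matching $\match \in \mathcal{M}$, the algorithm's next step is completely determined by $\match$ together with the fixed deterministic rules, leaving no room for choice. First I would make explicit the two sources of potential nondeterminism in one iteration of Algorithm~\ref{alg:g}: (a) \emph{which} blocking pair $(c,s)$ is processed, and (b) \emph{which} incumbent student $s'\in A_c$ is dropped when $c$ is already full. Everything downstream of these two selections (adding $s$ to $A_c$, removing $s$ from its former assignment $c'$ or from $U$, and recording the result as $\match^{(n+1)}$) is a deterministic bookkeeping operation on the chosen data, so the only thing to argue is that (a) and (b) each admit a unique answer.

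For (a), I would invoke hypothesis (i): the loops over students and over colleges run in a fixed order (say increasing index). Given $\match$, the set of blocking pairs $B(\match)\subseteq S\times C$ is a deterministic function of $\match$ --- membership of $(c,s)$ in $B(\match)$ is decided purely by the inequalities in the definition of a blocking pair, which reference only $\util_s$, $\util_c$, the quotas, and $\match$ itself. Since the nested loop scans $S\times C$ in a fixed total order, the pair it resolves is the lexicographically-first element of $B(\match)$; if $B(\match)=\emptyset$ the algorithm terminates, and we may set $f(\match)=\match$, consistent with the fixed-point characterisation used in Theorem~\ref{th:2}. For (b), I would invoke hypothesis (ii): among the incumbents $s'$ whose replacement by $s$ improves $\util_c$ (i.e.\ those satisfying Equation~\eqref{eq:tiebreak}), the algorithm selects by the stated tie-breaking rule on the student index, which picks out a single $s'$; and if no such $s'$ exists the pair was not in fact blocking, so this case does not arise. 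Combining the two, the triple $(c,s,s')$ is a function of $\match$, hence so is $\match^{(n+1)}$, which is exactly the claim that $f$ is single-valued.

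I would then close by noting the induction this enables: starting from a fixed $\match^{(0)}$, the entire trajectory $\match^{(0)},\match^{(1)},\dots$ is determined, which is the property relied on in the proof of Theorem~\ref{th:2} (each state has a single well-defined successor, so the sequence either reaches a fixed point or enters a cycle that repeats verbatim). The main obstacle --- really the only subtle point --- is making sure the argument genuinely covers \emph{all} branch points in the pseudocode rather than only the obvious one; in particular one must check that the ``Consider Unassigned Students'' style sub-steps and the improvement test itself contain no hidden choice, which follows because each such test is a comparison of utility values and, under the stated convention, when several candidates are tied the index rule again breaks the tie. No genuinely deep idea is needed; the content is simply a careful audit of the algorithm's control flow against hypotheses (i) and (ii).
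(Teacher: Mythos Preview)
Your argument is correct and mirrors the paper's proof essentially step for step: both split on whether $B(\match)$ is empty, then use hypothesis~(i) to pin down the first blocking pair in loop order and hypothesis~(ii) to pin down the incumbent $s'$ to drop, concluding that $f(\match)$ is single-valued. The only minor slip is your reference to a ``Consider Unassigned Students'' sub-step, which belongs to Algorithm~\ref{alg:det} rather than Algorithm~\ref{alg:g}; it is harmless here but can be deleted.
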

\begin{proof}
Given a matching $\match$, let $B(\match)$ denote the set of blocking pairs under $\match$. At each iteration, the algorithm looks for a blocking pair (a student and a college who would both prefer to be matched together rather than staying in their current assignments). There may be several blocking pairs at the same time. If $B(\match)=\varnothing$, then no update occurs and we set $f(\match)=\match$; this is uniquely determined.

If $B(\match)\neq\varnothing$, the algorithm inspects the students in fixed student order and, for each student, inspects the colleges in fixed student order, then pick the first student-college pair that is a blocking pair ; let call it $(s^\ast,c^\ast)$. By the deterministic tie-breaking rule at college $c^\ast$ there is a unique choice $s'\in A_{c^\ast}$ to drop (if several candidates improve the group college, the tie-breaking rule selects one). Replacement of $s'$ with $s^\ast$ yields a newly determined match $\match'$. Hence $f(\match)=\match'$ is uniquely determined. 
\end{proof}

\subsection{GSA counterexample and swapping.}
\label{app:gsa_counterexample} 

In this example, GSA stops with an unstable matching. However, after swapping we end up with a stable matching.

Let us consider 2 colleges: $c_1$ with quota 2 and $c_2$ with quota 1,  4 students $s_1, s_2, s_3, s_4$, each of whom can attend only one college. The different preference are defined in table \ref{tab:stab} below:

\begin{table}[ht]
\begin{center}
	\caption{Table example of 2 colleges and 4 students with preference lists}
	\label{tab:stab}
	\begin{tabular}{rll}\toprule
		\textbf{Col. and stu.} & \textbf{preferences}  \\ \midrule
		$c_1$ & $s_2 s_4 > s_1 s_2> s_1 s_3 > s_3 s_4$   \\
        $c_2$ & $s_4 >s_2 > s_3 > s_1 $  \\
        $s_1$ & $c_2 > c_1$ \\
        $s_2$ & $c_2 > c_1$ \\
        $s_3$ & $c_1 > c_2$ \\
        $s_4$ & $c_1 > c_2$ \\
       \bottomrule
	\end{tabular}
    \end{center}
\end{table}

As we only have 4 students and 2 colleges, we will make all the process here manually (from GS proposals until swapping ).\\

\textit{Iteration 1}: students $s_1$ and $s_2$ propose to college $c_2$ while students $s_3$ and $s_4$ propose to college $c_1$. we then have the first matching: $ c_1 \rightarrow (s_3 s_4) ; c_2 \rightarrow (s_2)$; Unmatched $\rightarrow ( s_1)$.\\

\textit{Iteration 2}: student $s_1$  proposes to his second choice college $c_1$ and $c_1$ prefers $(s_1 s_3)$ over $(s_3 s_4)$. The matching becomes: $ c_1 \rightarrow (s_1 s_3) ; c_2 \rightarrow (s_2)$; Unmatched $\rightarrow ( s_4)$.\\

\textit{Iteration 3}: student $s_4$  proposes to his second choice college $c_2$ and $c_2$ prefers $(s_4)$ over $(s_2)$. The matching becomes: $ c_1 \rightarrow (s_1 s_3) ; c_2 \rightarrow (s_4)$; Unmatched $\rightarrow ( s_2)$.\\

\textit{Iteration 4}: student $s_2$  proposes to his second choice college $c_1$ and $c_1$ prefers $(s_1 s_2)$ over $(s_1 s_3)$. The new matching becomes: $ c_1 \rightarrow (s_1 s_2) ; c_2 \rightarrow (s_4)$; Unmatched $\rightarrow ( s_3)$.\\

\textit{Iteration 5}: student $s_3$  proposes to his second choice college $c_2$ but $c_2$ prefers $s_4$ over $ s_3$ and rejects $s_3$. All students already proposed to every college in their preference lists, and each college reaches his quota so Gale-Shapley methods ends here with matching: The new matching becomes: $ c_1 \rightarrow (s_1 s_2) ; c_2 \rightarrow (s_4)$; Unmatched $\rightarrow ( s_3)$, but the resulting matching is not stable because $(c_1, s_4)$ forms a blocking pair.
Let us continue with swapping.\\

\textit{Iteration 6}: student $s_4$  prefers college $c_1$ over $c_2$ and college $c_1$ prefers $(s_2 s_4)$ over $(s_1 s_2)$. We then swap $s_4$ and leave $s_1$ unmatched. The new matching becomes: $ c_1 \rightarrow (s_2 s_4) ; c_2 \rightarrow ()$; Unmatched $\rightarrow (s_1 s_3)$.\\

\textit{Iteration 7}: students $s_1$  and $s_3$ both prefer college $c_2$ rather being unmatched but $c_2$ prefers  $s_3$  $s_1 $.  The new and final matching becomes: $ c_1 \rightarrow (s_2 s_4) ; c_2 \rightarrow (s_3)$; Unmatched $\rightarrow (s_1)$, and \textbf{this final matching is stable}.

\subsection{Proof of Theorem \ref{th:1}}
\label{app:1}

We will prove this with example. We just need to proof that there exist at least one example which is not stable under those conditions. 

Let's consider 2 colleges $c_1$ and $c_2$, each with a quota of 2 students and 5 students $s_1, s_2, s_3, s_4, s_5$, each of whom can attend only one college, as defined in example \ref{ex:1}. the different preference are defined in table \ref{tab:example} bellow:

\begin{table}[ht]
\begin{center}
	\caption{Table example of 2 colleges and 5 students with preference lists}
	\label{tab:example}
	\begin{tabular}{rll}\toprule
		\textbf{Col. and stu.} & \textbf{preferences}  \\ \midrule
		$c_1$ & $s_2 s_5 > s_5 s_4> s_1 s_4 > s_1 s_2>   s_2$   \\
        $c_2$ & $s_2 s_1 > s_3 s_2> s_3 s_5 > s_4 s_3> s_1 s_3 > s_1 > s_3 $  \\
        $s_1$ & $c_1 > c_2$ \\
        $s_2$ & $c_1 > c_2$ \\
        $s_3$ & $c_2 > c_1$ \\
        $s_4$ & $c_2 > c_1$ \\
        $s_5$ & $c_2 > c_1$ \\ \bottomrule
	\end{tabular}
    \end{center}
\end{table}

The first part of the algorithm which is GSA  comes out with the following matching: $ c_1 \rightarrow (s_4 s_5) ; c_2 \rightarrow (s_1 s_2)$; Unmatched $\rightarrow s_3$ . But this final GS matching is not stable, Let's continue with swapping.\\

$s_2$ and $c_1$ form a blocking pair, $c_1$ prefers $(s_5 s_2)$ over $(s_4 s_5)$ and $s_2$ prefers $c_1$ over $c_2$. so the new matching become $ c_1 \rightarrow (s_2 s_5) ; c_2 \rightarrow (s_1)$; Unmatched $\rightarrow ( s_4 s_3)$.\\

   $s_3$ and $c_2$ form a blocking pair, $c_2$ prefers $(s_1 s_3)$ over $(s_1)$ alone and $s_3$ prefers $c_2$ rather than be unmatched. so the new matching become $ c_1 \rightarrow (s_2 s_5) ; c_2 \rightarrow (s_1 s_3)$; Unmatched $\rightarrow ( s_4)$.\\

 $s_4$ and $c_2$ form a blocking pair, $c_2$ prefers $(s_4 s_3)$ over $(s_1 s_3)$ and $s_4$ prefers $c_2$ rather than be unmatched. so the new matching become $ c_1 \rightarrow (s_2 s_5) ; c_2 \rightarrow (s_4 s_3)$; Unmatched $\rightarrow ( s_1)$.\\

 $s_5$ and $c_2$ form a blocking pair, $c_2$ prefers $(s_3 s_5)$ over $(s_3 s_4)$ and $s_4$ prefers $c_2$ over $c_1$. so the new matching is now $ c_1 \rightarrow (s_2) ; c_2 \rightarrow (s_3 s_5)$; Unmatched $\rightarrow (s_1, s_4)$.\\

 $s_1$ and $c_1$ form a blocking pair, $c_1$ prefers $(s_1 s_2)$ over $(s_2)$ and $s_1$ prefers $c_1$ over $c_2$. so the new matching become $ c_1 \rightarrow (s_2 s_1) ; c_2 \rightarrow (s_3 s_5)$; Unmatched $\rightarrow ( s_4)$.\\

 $s_4$ and $c_1$ form a blocking pair, $c_1$ prefers $(s_1 s_4)$ over $(s_1 s_2)$ and $s_4$ prefers $c_1$ rather than be unmatched. so the new matching become $ c_1 \rightarrow (s_4 s_1) ; c_2 \rightarrow (s_3 s_5)$; Unmatched $\rightarrow ( s_2)$.\\

  $s_2$ and $c_2$ form a blocking pair, $c_2$ prefers $(s_3 s_2)$ over $(s_3 s_5)$ and $s_2$ prefers $c_2$ rather than be unmatched. so the new matching become $ c_1 \rightarrow (s_4 s_1) ; c_2 \rightarrow (s_3 s_2)$; Unmatched $\rightarrow ( s_5)$.\\

  $s_5$ and $c_1$ form a blocking pair, $c_1$ prefers $(s_4 s_5)$ over $(s_1 s_4)$ and $s_5$ prefers $c_1$ rather than be unmatched. so the new matching become $ c_1 \rightarrow (s_4 s_5) ; c_2 \rightarrow (s_3 s_2)$; Unmatched $\rightarrow ( s_1)$.\\

   $s_1$ and $c_2$ form a blocking pair, $c_2$ prefers $(s_1 s_2)$ over $(s_3 s_2)$ and $s_1$ prefers $c_2$ rather than be unmatched. so the new matching becomes $ c_1 \rightarrow (s_4 s_5) ; c_2 \rightarrow (s_1 s_2)$; Unmatched $\rightarrow ( s_3)$.\\

This last matching is the same matching that we started with, so if we continue, we are going to do the same cycle. This cycle of blocking pairs repeats, demonstrating that satisfying one blocking pair creates another. So, there is no stable matching in this case.$\Box$

\section{Impact of diversity parameter \texorpdfstring{$\lambda$}{lambda}}  
\label{app:2}

We study the role of diversity weight $\lambda$ in the college utility function to illustrate the trade-off between fairness and efficiency when diversity is highly valued.

\begin{figure*}[ht]
  \centering
  \includegraphics[width=0.95\linewidth]{./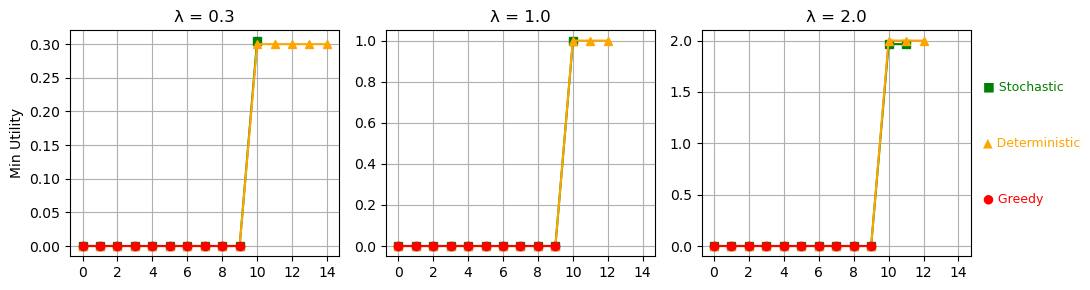}
  \includegraphics[width=0.95\linewidth]{./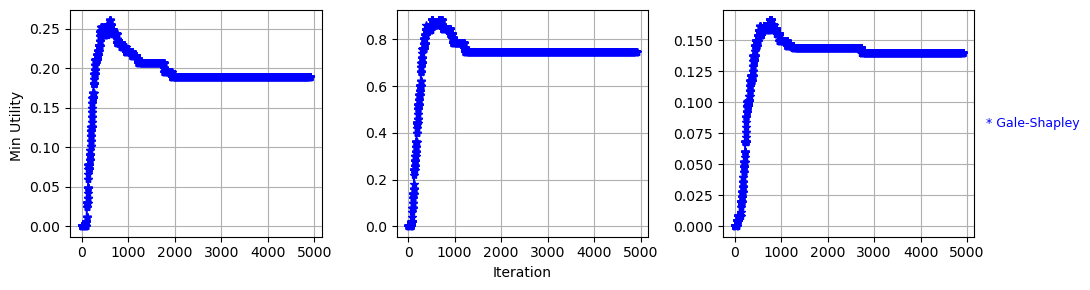}
  \caption{Improvement in minimum utility for $\lambda=0.3$ (ranking dominates), $\lambda=1.0$ (diversity has weight comparable to rank) and $\lambda=2.0$( diversity dominates). The experiment is done under 100 trials, 500 students and 10 colleges.}
  \label{fig:3}
\end{figure*}

\begin{figure*}[ht]
  \centering
  \includegraphics[width=0.95\linewidth]{./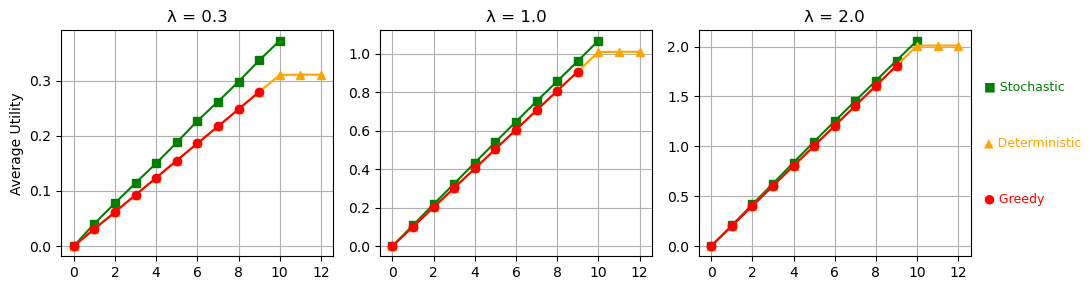}
  \includegraphics[width=0.95\linewidth]{./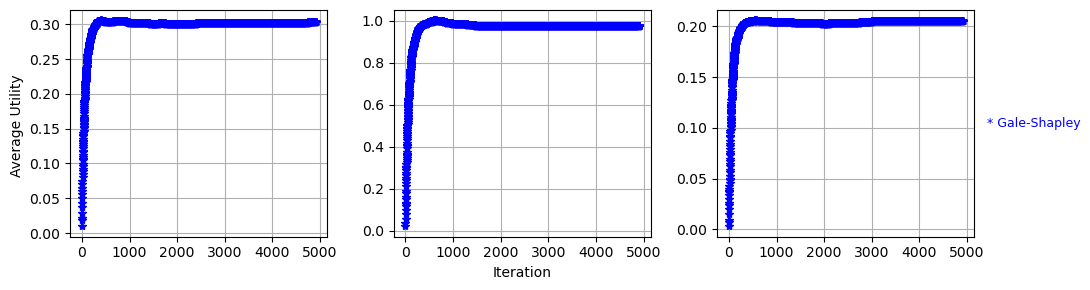}
  \caption{Improvement in average utility for $\lambda=0.3$ (ranking dominates), $\lambda=1.0$ (diversity has weight comparable to rank) and $\lambda=2.0$( diversity dominates). The experiment is done under 100 trials, 500 students and 10 colleges.}
  \label{fig:5}
\end{figure*}

As shown in Figure~\ref{fig:3} and \ref{fig:5} , increasing $\lambda$ 
encourages more diverse sets of students, which further raises the worst-off utility for stochastic and deterministic methods, but not for GS which drastically decreases and basic greedy that did not change. This is expected because $\lambda$ controls the diversity in the set and the the non-linearity. The more we increase $\lambda$, the more the utility function is non-linear, which is not favorable for GSA that is good for the linear case.

\end{document}